\newcommand{\pns}{P_{ \{ n , s\} }}
\newcommand{\doi}[1]{\textsc{doi}: \href{http://dx.doi.org/#1}{\nolinkurl{#1}}}
\newcommand{\C}{{\mathcal C}}
\newcommand{\sip}{{\sim_\prec}}
\newcounter{thm}
\newtheorem{theorem}[thm]{Theorem}
\newtheorem{lemma}[thm]{Lemma}
\newtheorem{definition}[thm]{Definition}
\newtheorem{proposition}[thm]{Proposition}
\newtheorem{corollary}[thm]{Corollary}
\newtheorem{examples}[thm]{Example}
\newtheorem{remark}[thm]{Remark}
\newtheorem{notation}[thm]{Notation}
\newtheorem{hypothesis}[thm]{Hypothesis}
\begin{document}

\author{Peter M. Hines \institute{YCCSA, University of York}
	\email{peter.hines@york.ac.uk}
	}
%\authorrunning{P. Hines}
%\institute{Y.C.C.S.A.  University of York} 
 \title{Information Flow in Pregroup Models of Natural Language}
%\titlerunning{Particle-style models of pregroups}

\maketitle
\begin{abstract}
{\em This paper is about pregroup models of natural languages, and how they relate to the explicitly categorical use of pregroups in Compositional Distributional Semantics and Natural Language Processing.  These categorical interpretations make certain assumptions about the nature of natural languages that, when stated formally, may be seen to impose strong restrictions on pregroup grammars for natural languages.   

We formalize this as a hypothesis about the form that pregroup models of natural languages must take, and demonstrate by an artificial language example that these restrictions are not imposed by the pregroup axioms themselves.  We compare and contrast the artificial language examples with natural languages (using Welsh, a language where the `noun' type cannot be taken as primitive, as an illustrative example).

The hypothesis is simply that there must exist a causal connection, or information flow, between the words of a sentence in a language whose purpose is to communicate information. This is not necessarily the case with formal languages that are simply generated by a series of `meaning-free' rules. This imposes restrictions on the types of pregroup grammars that we expect to find in natural languages; we formalize this in algebraic, categorical, and graphical terms. 

We take some preliminary steps in providing conditions that ensure pregroup models satisfy these conjectured properties, and discuss the more general forms this hypothesis may take.
}
\end{abstract}

%\tableofcontents

%\newpage

\section{Introduction}
Lambek pregroups are algebraic structures with a strongly categorical / logical flavor, proposed for modeling linguistic phenomena \cite{JL99}. More recently they have been used heavily within Natural Language Processing -- in particular, the field of Compositional Distributional Semantics \cite{CCS}.  In language processing, their utility is heavily based on a strongly categorical interpretation, and we will freely mix algebraic and categorical descriptions throughout this paper -- hopefully pointing out how the two are related.

A particularly puzzling feature of pregroups is that they seem to be `overspecified'; they have dual notions of {\em expansion} and {\em contraction} (see Section \ref{underover-sect}), and a corresponding neat graphical calculus of {\em underscores} and {\em overscores} (Section \ref{underover-sect}). However, for linguistic purposes, only the expansions / underscores are relevant. From Lambek's original work onwards, this has been something of a mystery; \cite{JL99} proposes that although the `extra structure' has no linguistic interpretation, it nevertheless helps determine the algebraic structures that are in fact useful.

An alternative viewpoint comes from categorical perspectives on Natural Language Processing such as \cite{CCS,GSCCP}. Although this field works with a degenerate notion of pregroups (i.e. compact closed categories -- see Section \ref{compact-sect}), both `underscores' and `overscores' of graphical models are crucial in describing {\em flow of information} or {\em causal connections} in the sense of \cite{KU}. In the linguistic setting, this is a direct claim that they model information flow or interaction  between distinct parts of (grammatically correct) sentences. We refer to this as the {\em categorical hypothesis} (see Section \ref{cat_hyp-sect}). 

By contrast with linguistic models generally, Compositional Distributional Semantics uses a very degenerate form of pregroups. However, the categorical hypothesis is equally applicable to arbitrary pregroup models of natural language. The purpose of this paper is to show that it is not simply a useful heuristic or worldview, but also makes concrete claims about the structure of (pregroup models of) natural languages.  

We first give the basic formalism, including the construction of free pregroups (the particular form used in linguistic models), give a categorical interpretation of this, and relate it to the highly graphical properties of pregroup representations. This is followed by an illustrative example of modeling natural language grammar using pregroups, together with a formal categorical description of this process. We then use this formal description to consider some concrete consequences of the categorical hypothesis. 

We then give a formal language example of a pregroup grammar that does not conform to the predictions of the categorical hypothesis, and explain why we would not expect to find such behavior in a pregroup model of natural language. We conclude by considering potential counterexamples, possible extensions, and take the first steps towards an algebraic axiomatisation of grammars with the predicted behavior.   

\section{Basic definitions \& properties}
In \cite{JL99}, pregroups are defined as follows:
\begin{definition}
\label{pre-def}
A {\bf (Lambek) pregroup} is a monoid $P$ equipped with a partial order $\leq$ compatible with composition (so $p\leq q$ and $r\leq s$ implies $pr\leq qs$),
and two operations $(\ )^l, (\ )^r:P\rightarrow P$ called the {\bf left-} and {\bf right- adjoints} respectively. These 
%\end{itemize}
%The partial order and adjoints
 are related by the defining identities, $p^lp \leq 1 \leq pp^l$ and $pp^r \leq 1 \leq p^r p$.
 
 Following \cite{WB}, we say that a pregroup $P$ is {\bf proper} if some left- and right- adjoints are distinct -- i.e. there exists some $a\in P$ satisfying $a^r\neq a^l$. Extending this, we say that $P$ is {\bf fully proper} if all non-unit left- and right- adjoints are distinct -- i.e. $a^r\neq a^l$ for all $a\neq 1\in P$.
\end{definition}

\begin{remark}
	Free pregroups are, of course, fully proper. However, there are few concrete models of proper pregroups. The vector space models used in Compositional Distributional Semantics not only identify left- and right- adjoints, but also identify elements with their own adjoints. 
	\end{remark}

The following results on pregroups are standard; see \cite{JL99}
\begin{lemma}
Let $(P,\leq , (\ )^l ,(\ )^r)$ be a pregroup. Then, for all $p,q\in P$, the adjoints are contravariant, and order-inverting, so $(pq)^l=q^lp^l$ and $p\leq q \Leftrightarrow q^l\leq p^l$, 
and similarly for the right adjoint. The identity is its own left and right adjoint, so $1^r=1=1^l$ and the two adjoints are mutually inverse,  so $(p^r)^l =p =  (p^l)^r$.
\end{lemma}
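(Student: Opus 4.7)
My plan is to prove everything from a single uniqueness lemma for adjoints, which is the natural workhorse here. Concretely, I would first show that if $x \in P$ satisfies $xp \leq 1 \leq px$, then $x = p^l$. The argument is a standard ``zig-zag'': from the defining inequalities of $p^l$ and the assumed inequalities for $x$, one computes
\[
p^l \;=\; p^l \cdot 1 \;\leq\; p^l(px) \;=\; (p^l p)x \;\leq\; 1 \cdot x \;=\; x,
\]
and symmetrically $x \leq x(pp^l) = (xp)p^l \leq p^l$, whence $x = p^l$. The analogous statement for right adjoints is identical. I regard this as the only nontrivial step, since everything else reduces to exhibiting the right element and invoking uniqueness.

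With uniqueness in hand, the remaining algebraic identities fall out by checking the defining inequalities on suitable candidates. For contravariance I would verify that $q^l p^l$ satisfies the left-adjoint inequalities for $pq$: compatibility of $\leq$ with composition gives
\[
(q^l p^l)(pq) \;=\; q^l(p^l p)q \;\leq\; q^l q \;\leq\; 1, \qquad 1 \;\leq\; p p^l \;\leq\; p(qq^l)p^l \;=\; (pq)(q^l p^l),
\]
so uniqueness yields $(pq)^l = q^l p^l$; the right-adjoint case is symmetric. For $1^l = 1 = 1^r$, the candidate $1$ trivially satisfies $1 \cdot 1 \leq 1 \leq 1 \cdot 1$. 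For the double-adjoint equations $(p^r)^l = p = (p^l)^r$, the defining inequalities $pp^r \leq 1 \leq p^r p$ say precisely that $p$ is a left adjoint to $p^r$, and $p^l p \leq 1 \leq p p^l$ says $p$ is a right adjoint to $p^l$; uniqueness then closes both identities.

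The order-inversion statement $p \leq q \Leftrightarrow q^l \leq p^l$ I would handle separately, without appeal to uniqueness. For the forward direction, assume $p \leq q$. Then $q^l p \leq q^l q \leq 1$, and multiplying $1 \leq p p^l$ on the left by $q^l$ (using compatibility) gives $q^l \leq q^l p p^l \leq 1 \cdot p^l = p^l$. For the converse, assume $q^l \leq p^l$. Multiplying on the right by $p$ gives $q^l p \leq p^l p \leq 1$, and then $p \leq q q^l p \leq q \cdot 1 = q$ by the same trick applied to $1 \leq q q^l$. The right-adjoint version is completely analogous.

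The only real obstacle is the uniqueness lemma itself, because one must notice the zig-zag trick before the rest of the proof becomes mechanical; once uniqueness is in place, every other clause of the lemma reduces either to exhibiting a candidate and verifying two inequalities, or to a two-line monotonicity argument.
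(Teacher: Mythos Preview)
Your proof is correct and complete; the uniqueness-of-adjoints lemma is exactly the right workhorse, and every subsequent clause follows as you describe. Note, however, that the paper does not actually prove this lemma: it is stated as a standard fact with a citation to \cite{JL99}, so there is no ``paper's own proof'' to compare against. Your argument is essentially the classical one, and would serve perfectly well as the omitted proof.
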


\subsection{Contractions \& expansions} \label{underover-sect}
The following definitions provide useful notational tools for calculations within pregroups: %
%The {\em contractions} and {\em expansions} of a pregroup are commonly illustrated graphically, by {\em underscores and overscores for . These are defined as follows:
\begin{definition}
Given a pregroup $(P,\leq, (\ )^l , (\ )^r)$,  {\bf contractions} are inequalities of the form $uaa^rv \leq uv$ or $u a^lav \leq uv$, 
and {\bf expansions} are inequalities of the form $uv \leq ua^rav$ or $uv \leq uaa^lv$. 
Graphically, contractions (resp. expansions) are indicated by {\bf underscores} (resp. {\bf overscores}). These may be nested and combined, to give a concise notation for expressing inequalities; the decorated word  
\[
\ooalign{ % n^r n n^l n n^l n
  $\overline{n^r \ \overline{n \ n^l} \ \overline{n\ n^l}\ n}$\cr
  $n^r \ n \ \underline{n^l\ n}\ \underline{n^l \ n} $\cr
}
\]
expresses, via the  contractions,  the inequality
$n^r n  n^l  n  n^l  n  \leq  n^r n n^l  n  \leq n^r  n$
%\[ n^r \ r \ n^l \ n \ n^l \ n \ \leq \ n^r \ n n^l \ n \ \leq \ n^r \ n \]
and via the expansions, the inequality
$1  \leq   n^r    n    \leq    n^r    n    n^l    n    \leq    n^r    n    n^l   n    n^l    n$. 
%\[ 1 \ \leq \  n^r \   n \   \leq  \  n^r  \  n  \  n^l \   n \   \leq  \  n^r \   n \   n^l \   n \   n^l \   n \] 
\end{definition}
%\begin{remark}{\em Key properties of pregroups, graphically} \\  
%\end{remark}

\begin{remark}[Normal forms for expansions and contractions] 
	As demonstrated in \cite{JL99},  {\em any} inequality in a pregroup  may be derived by first considering contractions, and then expansions. This gives a normal form for patterns of underscores \& overscores.  Logically / categorically, this may be seen as a cut-elimination or coherence result.
\end{remark}
 We will rely on normal forms throughout, but first  first consider the categorical status of pregroups and the free pregroup construction.  
 
\section{Free pregroups and quasi-pregroups}\label{free-sect}
Linguistic applications commonly use {\em free pregroups}. These are defined in \cite{WB} in terms of {\em free quasi-pregroups}, as follows:
\begin{definition}
	A {\bf quasi-pregroup} $(Q,\prec, (\ )^l,(\ )^r)$ is a weakening of Definition \ref{pre-def} to the case where $\prec$ is a compatible {\em preorder} rather than a {\em partial order} on the monoid $Q$. 
	The  {\bf free quasi-pregroup $FQP_{G}$} on a poset $(G,\leq_G)$ is defined as follows:
	\begin{itemize}
		\item The underlying monoid is the free monoid $\left( G \times \mathbb Z\right)^*$. % and composition is simply concatenation. By convention, we denote the empty word $\{ \}$ by $1\in  FP_G$.
		\item 
		The left- and right- adjoints are defined by inductively by %given by 
		%The adjoints are given on elements of the form $(g,z)$, where $ together with contravariant operations $(\  )^l$ and $(\ )^r$ defined by 
		\begin{itemize}
			\item  $1^l=1=1^r$ 
			\item $(g,z)^r=(g,{z+1})$ and $(g,z)^l=(g,{z-1})$, for all $(g,z)\in G \times \mathbb Z$, %. and $(uv)^l=(v^l)(u^l)$
			\item $(uv)^r=(v^r)(u^r)$ and $(uv)^l=(v^l)(u^l)$, for all $u,v\in \left( G \times \mathbb Z\right)^*$.
		\end{itemize}
		\item
		The preorder $\prec$ is the smallest preorder containing the inductively defined relation $\mathcal R$, given by 
		\begin{itemize}
			\item $g\leq g'\in G$ implies $(g,z) R (g',z)$ for even $z\in \mathbb Z$, and $(g',z) R (g,z)$ for odd $z\in \mathbb Z$.
%			 $(g,z)\ {\mathcal R}\ (g',z)$, for all $z\in \mathbb Z$. 
			%		\item $w\ {\mathcal R} \ w$ for all $w\in (G\times Z)^*$,
			\item $(g,z)(g,z+1)\ {\mathcal R}\ 1\ {\mathcal R}\ (g,z+1)(g,z)$ for all $(g,z)\in G \times Z$,
			\item $a \ {\mathcal R}\ b \Rightarrow uav\ {\mathcal R}\ ubv$, for all $a,b,u,v\in (G\times \mathbb Z)^*$.
		\end{itemize}
		By basic algebra, $\prec=\bigcup_{j=0}^\infty {\mathcal R}^j$ is the reflexive transitive closure, or Kleene star, of $R$. 
	\end{itemize}
	The free quasi-pregroup on a set $H$ is given by assuming the discrete partial ordering (i.e. equality).% on $H$.%A special case of the above is where we have an arbitrary generating set $H$ (i.e. without a partial order). In this case, the {\bf free quasi-pregroup on the set $H$} $FQP_H$ is given by assuming the discrete partial order $h\leq k \Leftrightarrow h=k$ on $H$ and using the above definition.
\end{definition}

\begin{remark} The preorder $\prec$ in the above definition is not a partial order; $(g,0)(g,1)(g,0) \prec (g,0)$ and $(g,0)(g,1)(g,0) \prec (g,0) $, but $(g,0) \neq (g,0)(g,1)(g,0)$. % and $
\end{remark}

Free pregroups then arise as quotients of free quasi-pregroups by the {\em induced equivalence relation}.

\begin{definition} Let $(P,\prec )$ be a preordered set. The {\bf induced equivalence relation}  $\sip$ is defined by $p \sim_\prec q \ \Leftrightarrow \ p\prec q \ \mbox{ and } \ q \prec p$, and the quotient $P/{\sim_\prec}$ is a poset w.r.t. the {\bf induced partial order} $\prec/\sim_\prec$. 
\end{definition}

\begin{proposition}\label{cong-prop}
	Let $Q,\prec, (\ )^l,(\ )^r)$ be a quasi-pregroup. Then the induced equivalence relation is a congruence on the quasi-pregroup structure.
\end{proposition}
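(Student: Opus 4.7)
The plan is to verify the three standard conditions for $\sim_\prec$ to be a congruence on a structure with one binary and two unary operations: that it is an equivalence relation, that it is compatible with the monoid product, and that it is compatible with each adjoint. The first is immediate from the definition --- reflexivity and transitivity are inherited from $\prec$, and symmetry is built into the ``both $p\prec q$ and $q\prec p$'' clause --- so this takes essentially no work and I would dispatch it in a single sentence.

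Compatibility with multiplication is a direct consequence of the monoid-compatibility of $\prec$ that is built into the quasi-pregroup axioms: from $a\prec b$ and $c\prec d$ one obtains $ac \prec bd$, and applying this in both directions promotes $a \sim_\prec b$, $c \sim_\prec d$ to $ac \sim_\prec bd$. Again this is a one-line observation once the axioms are unpacked, so I would not dwell on it.

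The substantive step is showing $a \sim_\prec b$ implies $a^l \sim_\prec b^l$, and dually for the right adjoint. I would reduce this to an order-inversion statement $a \prec b \Rightarrow b^l \prec a^l$, which is the quasi-pregroup analogue of the pregroup lemma stated earlier. The usual pregroup argument carries over essentially verbatim: from $a \prec b$ and monoid compatibility, $b^l a \prec b^l b$, which combines with the contraction $b^l b \prec 1$ to give $b^l a \prec 1$; multiplying on the right by $a^l$ and invoking the expansion $1 \prec a a^l$ yields $b^l \prec b^l a a^l \prec a^l$. The main (if mild) obstacle is to confirm that this derivation nowhere uses antisymmetry of $\leq$, so that it remains valid when the order is weakened to a preorder --- a direct inspection shows it uses only reflexivity, transitivity, monoid-compatibility, and the defining adjoint inequalities, all of which are quasi-pregroup axioms. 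Order-inversion together with the symmetry of $\sim_\prec$ then immediately yields the required adjoint-compatibility, completing the congruence check.
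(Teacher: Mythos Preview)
Your argument is correct and proceeds along the expected lines: verify that $\sim_\prec$ respects the product and the two adjoints, the latter via the order-reversal property of adjoints (whose derivation indeed uses only reflexivity, transitivity, monoid-compatibility, and the defining adjoint inequalities, hence survives the weakening from partial order to preorder). The paper's own proof does not carry out these checks directly; it simply records the three required compatibilities $ax\sim_\prec a'x'$, $a^l\sim_\prec (a')^l$, $a^r\sim_\prec (a')^r$ and defers to \cite{WB} for the verification, so your write-up in fact supplies more detail than the original.
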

\begin{proof}
	This is a key feature of the construction of free pregroups from free quasi-pregroups found in \cite{WB}, where it is shown that, $ax\sip a'x'$, $(a)^l \sip (a')^l$, and $(a)^r \sip (a')^r$, for all $a\sim_\prec a',x\sim_\prec x'\in Q$.
\end{proof}

\begin{definition}\label{quotient-def}
	The {\bf free pregroup} on a partially ordered set $(G,\leq)$, denoted $P_G$, is defined in \cite{WB} to be the quotient of $FQP_G$ by the induced equivalence relation $\sip$.  
\end{definition}	
The construction of free pregroups from free quasi-pregroups may of course be viewed categorically.
\subsection{Pregroups and quasi-pregroups, categorically}\label{compact-sect}
It is natural to treat pregroups and quasi-pregroups as categories; these are of a special form.
\begin{definition}
	A category is called {\bf posetal} if each hom-set has at most one element. 
	We define $\textsc{posetal}$ to be the category whose objects are posetal categories and whose arrows are functors, and $\textsc{poset}$ to be the full subcategory whose objects are posets, considered as small categories.
\end{definition}

 Interpretation of pregroups as posetal categories is well-established \cite{JL99,CGS}. Any preordered set may be treated as a category with arrows given by the preorder. Monoid composition is then a monoidal tensor, with the interchange law corresponding to compatibility, and the identity being a strict unit. The adjoints are contravariant monoidal endofunctors, and the defining identities become the axioms for a non-commutative form of {\em compact closure}.%compatibility of the partial order with composition is equivalent to the interchange law

Compact closure was originally defined in terms of abstract 2-categorical properties. 
In \cite{KL}, the abstract 2-categorical definition is shown to have a neat characterization in terms of the existence of a duality, and distinguished arrows,giving both a coherence theorem and diagrammatic calculus. We take the following definition, from \cite{KL} as fundamental:
\begin{definition}
	A {\bf compact closed category (CCC)} is %defined in \cite{KL} to be 
	a symmetric monoidal category with a dual $(\C,\otimes,\sigma,(\ )^*,I)$, equipped with {\bf unit} \& {\bf co-unit} arrows $\eta_A : I \rightarrow A \otimes A^*$ and $\epsilon_A : A^* \otimes A \rightarrow I$ at every object, $A\in Ob(\C)$. These satisfy the {\bf yanking axiom} $(1_A\otimes \epsilon)(\eta\otimes 1_A)  =1_A  =  (\epsilon_{A^*} \otimes 1_A)(1_A \otimes \eta_{A^*})$.  
\end{definition}

As is well-established \cite{JL99}, the correct setting for pregroups is the non-commutative form of the above:
\begin{definition}
	A {\bf (non-symmetric) compact closed category (NSCCC)} is 
	a monoidal category with left- and right- duals $(\ )^l, (\ )^r$, and left- and right- unit and co-unit arrows $\eta_A^{(l)},\eta_A^{(r)}$ and $\epsilon_A^{(l)},\epsilon_A^{(r)}$ satisfying the obvious non-symmetric analogues of the yanking axiom. 
\end{definition}
Note that the above definitions give CCCs as degenerate cases of NSCCCs. A great deal of literature exists on graphical interpretations and calculii for both symmetric \& non-symmetric  compact closure (e.g. \cite{GTC2,JSV,PS}) so we do not reproduce it here.

\begin{remark} It is well-established that pregroups are posetal non-symmetric compact closed categories \cite{JL99,CCS}. Identical reasoning demonstrates that quasi-pregroups are also posetal non-symmetric compact closed categories.
\end{remark}

\begin{notation}
	[Duals in category theory, the free monoid functor, and the Kleene star] There is an unfortunate clash of notation between the category theorists dual $(\ )^*$, the algebraists Kleene star $(\ )^*$ of relations, and the free monoid construction $(\ )^*$ on sets (which we will treat as a functor). 
	
	Hopefully the intended meaning of the overloaded `star' notation will be clear from the context, as this paper works with fully proper pregroups, and hence distinguishes left- and right- duals.
\end{notation}

\begin{definition} We define $\textsc{nsccc}$ to be the category whose objects are non-symmetric compact closed categories and whose arrows are adjoint-preserving monoidal functors, and $\textsc{ccc}$ to be the full subcategory of compact closed categories. 
\end{definition}

Essentially by definition, free constructions are functorial, and pregroups / quasi-pregroups are no exception --  this follows from the universal property for freeness. We make the following definitions:

\begin{definition}
	We define $FQP:\textsc{poset}\rightarrow \textsc{nsccc}$ to be the functor that takes a poset to the free quasi-pregroup on that poset. Similarly, $FP:\textsc{poset}\rightarrow \textsc{nsccc}$ is the functor that takes a poset to the free pregroup on that poset.
\end{definition}

\begin{proposition}
	There exists a natural transformation from $FQP$ to $FP$ whose components are given by the `quotienting by the induced equivalence relation' step of Definition \ref{quotient-def}. 
\end{proposition}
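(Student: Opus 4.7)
The plan is to take the components of the transformation to be the quotient maps $q_G : FQP(G) \to FP(G)$ of Definition \ref{quotient-def}, and then verify both that each $q_G$ is an arrow in $\textsc{nsccc}$, and that the family $\{q_G\}_{G \in \textsc{poset}}$ is natural in $G$.

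For the first point, the key input is Proposition \ref{cong-prop}: because $\sip$ is a congruence on the quasi-pregroup structure of $FQP(G)$, the quotient $q_G$ is well-defined as a monoid homomorphism, preserves the left and right adjoints, and carries $\prec$ into the induced partial order $\prec/\sip$ on $FP(G)$. In the posetal setting this is already enough to make $q_G$ a monoidal adjoint-preserving functor; the unit and counit arrows of the compact closed structure are simply the defining inequalities $1 \leq pp^l$ and the like, so their preservation is automatic once the order, monoid and adjoints are preserved. One should also verify that the target $FP(G)$ is genuinely a pregroup rather than a mere quasi-pregroup, which follows from the general fact that quotienting any preordered set by its induced equivalence yields a genuine poset.

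For naturality, I would use the universal property of $FQP$. Given a monotone map $f : G \to G'$, both composites $q_{G'} \circ FQP(f)$ and $FP(f) \circ q_G$ are arrows $FQP(G) \to FP(G')$ in $\textsc{nsccc}$, where $FP(f)$ is well-defined on $\sip$-classes precisely because $FQP(f)$ is a quasi-pregroup morphism and therefore sends $\sip$-related words to $\sip$-related words. On the generators $(g,0) \in G \times \mathbb{Z}$ of $FQP(G)$, both composites send $(g,0)$ to the class $[(f(g),0)] \in FP(G')$. Since every element of $FQP(G)$ is built from such generators using the monoid operation and the adjoints, and both composites preserve all of this structure, they must coincide everywhere.

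The main obstacle, I expect, is the bookkeeping in the first step: making sure all of the compact closed data descends cleanly through the quotient so that $q_G$ is confirmed as an $\textsc{nsccc}$ arrow and not merely a monoid/preorder map. Once this is in place, naturality is a formal consequence of the universal property of the free construction, checked on generators.
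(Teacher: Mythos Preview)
Your proposal is correct and follows essentially the same approach as the paper: invoke Proposition \ref{cong-prop} to see that each quotient map is an arrow in $\textsc{nsccc}$, then appeal to the universal property of the free construction for naturality. The paper's own argument is terser---it does not spell out the check on generators or the descent of the compact closed data---so your version is a more explicit rendering of the same two-step outline.
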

\begin{proof}
	Categorically, Proposition \ref{cong-prop} states that for a given poset $G$, quotienting by the induced equivalence relation gives a functor of non-symmetric compact closed categories, and hence an arrow in $\textsc{nsccc}(FQP(G),FP(G))$. We then have a family of arrows indexed by the objects of $\textsc{poset}$, and we again appeal to the universal properties implied by freeness to demonstrate that these are indeed the components of a natural transformation.% The $FQP$ and $PF$ functors are clearly closely related.  
\end{proof}

\section{Graphical properties of pregroups}
%For linguistic applications it is standard to work with free pregroups \cite{}. However, pregroups -- at least in the free case -- suffer from a lack of semantic models (see Section \ref{}). In compositional  distributional semantics it is instead standard to work with degenerate, or partial semantic models of pregroups (see Section \ref{}). For completeness,
Many of the categorical properties of pregroups are best illustrated graphically,  
using the conventions of overscores and underscores described above. Free pregroups (but not pregroups generally) then satisfy three core properties of {\em undirectedness}, {\em planarity} and {\em acyclicity}. By construction, the underscores and overscores are undirected. The lack of commutativity in the definition corresponds to planarity in an obvious way; in the free setting two distinct underscores (resp. overscores) may not overlap. 
The final property, acyclicity, means that in proper pregroups it is not possible to form closed loops using under- / over- scores. Although intuitively obvious, we provide a proof below to demonstrate how this is closely related to the construction of free pregroups given in Section \ref{free-sect}, and thus how the free pregroup construction relates to the Yanking axiom from the categorical description.
\begin{theorem}\label{acyclic-thm}
	Let $w\in P$ be an arbitrary non-empty word in a fully proper pregroup. Then no pattern of expansions / contractions on $w$ can contain a closed cycle.
	%In a free pregroup, no word has a complete set of expansions / contractions.
	%Let $(P,\leq,(\ )^l,(\ )^r)$ be an arbitrary pregroup. Then the underlying monoid is free iff the pregroup is strongly proper.%monoid  $\{ ( g,z) : g\in G , z\in \mathbb Z\}^*$.
\end{theorem}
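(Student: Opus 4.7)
The plan is to exploit the $\mathbb{Z}$-indexing of letters provided by the free pregroup construction of Section~\ref{free-sect}. The fully-proper hypothesis is precisely what ensures that the adjoint depth of each letter remains a well-defined invariant, so I may assume without loss of generality that $w$ lies in the free pregroup on its generators and record each position $i$ of $w$ by its adjoint depth $z_i \in \mathbb{Z}$. Under this bookkeeping, an underscore on positions $(i,j)$ with $i<j$ is precisely an edge with $z_j - z_i = +1$ (coming from a contractable pair $aa^r$ or $a^l a$), and an overscore on $(i,j)$ with $i<j$ is an edge with $z_j - z_i = -1$ (coming from an expandable pair $a^r a$ or $a a^l$).

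I would then establish elementary graph-theoretic bounds on the decoration. Each position is the endpoint of at most one underscore (since a single contraction removes it from the word) and at most one overscore (as the endpoint of at most one inserted pair in the nested expansion structure), so the decoration graph has maximum degree $2$. Any putative cycle must therefore alternate underscore and overscore edges, and must have even length $2k$ with $k \geq 2$, since no single pair of positions can satisfy both $z_j - z_i = +1$ and $z_j - z_i = -1$.

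Assuming for contradiction that such a cycle $v_1 v_2 \cdots v_{2k} v_1$ exists, I would traverse it and tally $z$-changes. On the one hand the sum $\sum_i (z_{v_{i+1}} - z_{v_i})$ equals $0$, since $z$ returns to its starting value. On the other hand each edge contributes $\pm 1$, with sign determined by its type (underscore vs.\ overscore) and its direction of traversal in the word (left-to-right vs.\ right-to-left). The hard part will be to establish that this signed sum equals $\pm 2$ for any valid alternating cycle, never $0$. I would prove this either by induction on $k$ --- the base case $k=2$ being a finite check on the two non-crossing matchings of four points, with the inductive step using a local yanking-style move that cancels an adjacent underscore-overscore zigzag of three consecutive cycle vertices and reduces to a cycle of length $2(k-1)$ --- or by a topological rotation-index argument treating the cycle as a simple closed curve in the plane, with positions on a horizontal line and scores drawn as arcs below and above. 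The use of yanking in the induction is exactly the promised link between the free pregroup construction and the ambient compact closed categorical picture: acyclicity of the decoration graph is the pregroup-theoretic shadow of the yanking axiom.
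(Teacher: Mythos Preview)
Your proposal is correct and is essentially the paper's own argument: both assign $\pm 1$ weights to arcs according to the change in adjoint depth, invoke yanking to collapse unidirectional runs, and obtain the contradiction that the total weight around a putative closed loop is $\pm 2$ rather than $0$. The paper frames this via a Jordan-curve / direction-change count after first reducing to the case $1\leq w\leq 1$, while you work directly with an arbitrary cycle and phrase the $\pm 2$ step as either an induction on cycle length or a rotation-index argument---but these are the same topological idea, and your explicit use of the $\mathbb{Z}$-grading is exactly what the paper's weight labels encode.
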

\begin{proof} 
	%Let $(P,\leq,(\ )^l,(\ )^r)$ be the free pregroup on some generating set $G$. It su
	We show that the only word satisfying $1\leq w \leq 1$ is the empty word (i.e. the identity); the general result follows by induction.%similarly. 

	Assume some word $w\neq 1\in P$ satisfying $1\leq w \leq 1$. Then there exists a set of expansions demonstrating $ w\leq 1$ and a set of contractions demonstrating $1\leq w$. Thus every symbol in $w$ is part of both a contraction and an expansion. We drawing these as under- / over- scores, and connect the ends of these with their corresponding symbols. This gives a set of (possibly nested)  Jordan curves in the plane:%, as illustrated in Figure \ref{acyclicity}. %Without loss of generality, we assume that we have a single Jordan curve. 
	
	\begin{figure}[h]
		\caption{Every symbol is part of an expansion \& a contraction}\label{acyclicity}
		\begin{center}
			\includegraphics[scale=0.5]{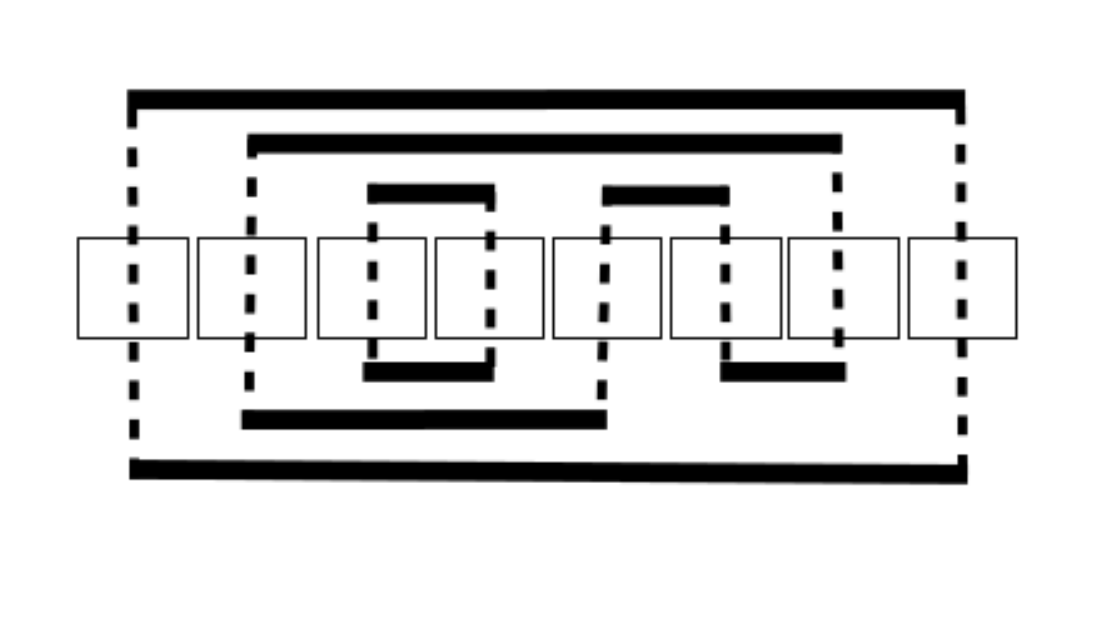}
		\end{center}
	\end{figure}
	
	The symbols of $w$ lie on the intersection of the curve, and a bisecting line -- note that each symbol uniquely determines, and is uniquely determined by, its neighbours {\em on the Jordan curve}. Now choose some symbol $x$ of the word $w$ together with a direction to traverse the curve on which it lies. Trivially, each symbol encountered is some adjoint of $x$; which one is uniquely determined by the direction of movement and whether the current arc lies above or below the bisecting line:%, as shown:% below:
	\[ 
	\xymatrix{
		\ar@{.}[r]	& x^r \ar@{.}[r]	   & x^{\ }  \ar@/_16pt/[l]_{\bf +1} \ar@{.}[r]		& 			&	 \ar@{.}[r]		&	x\ar@{.}[r] \ar@/^16pt/[r]^{\bf -1}  &  x^l \ar@{.}[r]	&       \\ 
		\ar@{.}[r]	& x^l \ar@{.}[r]	   & x  \ar@/^16pt/[l]^{\bf -1} \ar@{.}[r]		& 			&	 \ar@{.}[r]		&	x,\ar@{.}[r] \ar@/_16pt/[r]_{\bf +1}  &  x^r \ar@{.}[r]	& \\ 
	}
	\]
	We label these directed arcs by `weights' as shown, and take the sum around this closed loop.

	Consider a sub-arc of this Jordan curve that starts \& finishes with a contraction, on which the direction of movement (with respect to the bisecting line) does not change,  as shown below:
	\begin{center}
		\includegraphics[scale=0.4]{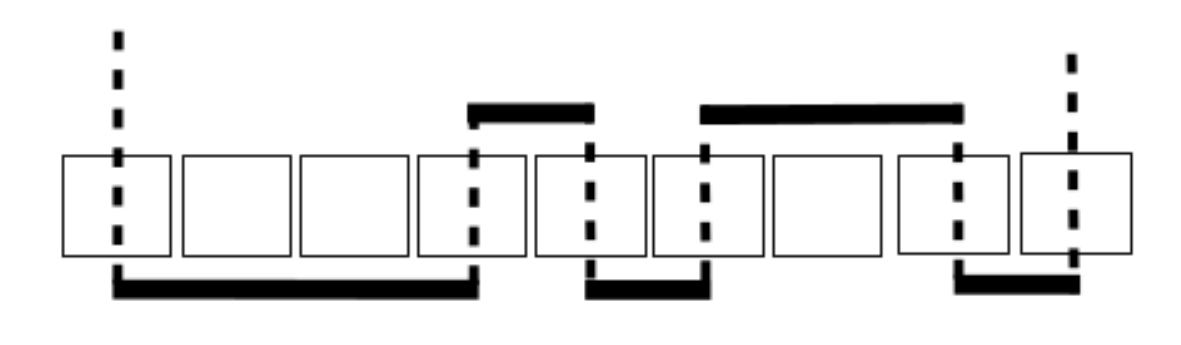}
	\end{center}
	No matter the length of this arc, the sum of weights along it is always $1$ when traversed from left to right, and $-1$ otherwise.  We therefore replace each such arc by a single arc beneath the bisecting line, labeled by either $1$, or $-1$, depending on the direction of traversal, as follows:%illustrated below:
	\begin{center}
		\includegraphics[scale=0.4]{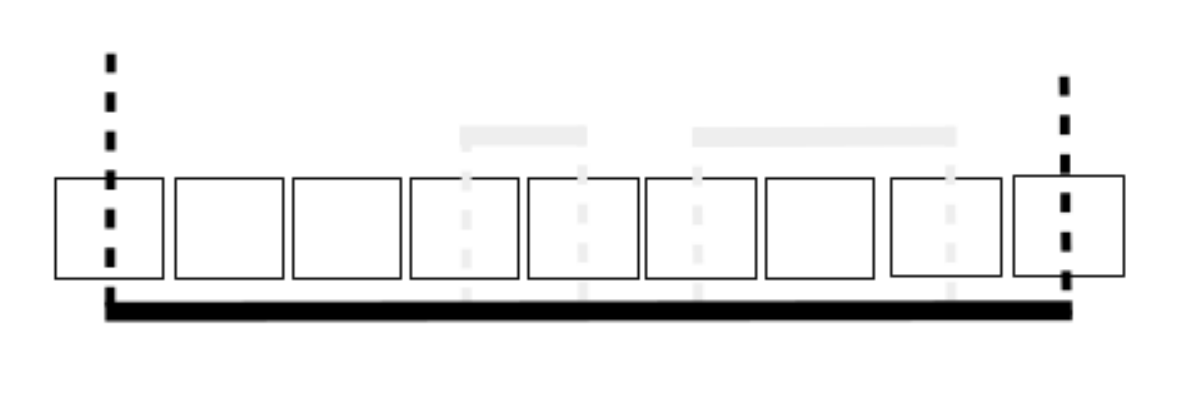}
	\end{center}
	Category theorists will, of course, notice the implicit appeal to the Yanking axioms!

	We do a similar replacement (with a similar interpretation) to repeated unidirectional subarcs that start / finish with an expansion, leaving a closed loop where the direction of movement ( with respect to the bisecting line) changes every time  this line is crossed. 
	
	On the original Jordan curve, a contraction can never be followed by another contraction, nor can an expansion be followed by another expansion, and the same holds in our simplified diagram.  Thus, the total number of contractions in the Jordan curve is equal to the total number of expansions.  
	Therefore, as we change direction at each crossing of the bisecting line, the sum the weights around the entire closed loop is either $2$ or $-2$. %, depending on the direction of traversal.  
	This implies the existence of some element whose left and right adjoints are identical, contradicting the assumption that $P$ is a fully proper pregroup.%of distinct left and right adjoints. %freeness of the pregroup.
\end{proof}

\begin{corollary} 
	Let $w$ be a word of length $\geq 2$ in a free pregroup. Then every pattern of expansions / contractions in will leave at least two symbols that are either not part of a contraction, or of an expansion.
\end{corollary}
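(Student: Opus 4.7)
The plan is to reduce the corollary to an elementary graph-theoretic statement and then invoke Theorem \ref{acyclic-thm}. A given pattern of expansions and contractions on $w$ determines two non-crossing partial matchings on the $n$ positions of $w$: the underscores give a ``contraction matching'' $M_c$, and the overscores give an ``expansion matching'' $M_e$. Form the multigraph $G$ on vertex set $\{1,\ldots,n\}$ with edge multiset $M_c \sqcup M_e$. Since each position can appear in at most one arc of each type, every vertex has degree at most $2$, and has degree exactly $2$ precisely when the corresponding symbol is part of both a contraction and an expansion. The symbols described in the corollary --- those that are not part of a contraction or not part of an expansion --- are thus exactly the vertices of $G$ of degree at most $1$, and the goal is to show that at least two such vertices exist.

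The next step is to argue that $G$ is acyclic. Any cycle in $G$ must alternate between $M_c$- and $M_e$-edges, since no vertex can be incident to two arcs of the same type; drawing such an alternating cycle using the standard underscore/overscore conventions then produces a closed loop of exactly the kind forbidden by Theorem \ref{acyclic-thm}. Hence $G$ contains no cycles, and since every vertex has degree at most $2$, it decomposes as a disjoint union of simple paths, with isolated vertices regarded as trivial paths.

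It remains to observe that any disjoint union of paths on $n \geq 2$ vertices contains at least two vertices of degree $\leq 1$: a non-trivial path has two endpoints (both of degree $1$), and an isolated vertex has degree $0$, so the only way to have fewer than two such vertices is to have a single component consisting of one isolated vertex, which forces $n = 1$. The main step to check carefully is the geometric translation in the middle paragraph --- that an alternating cycle in the abstract graph $G$ really does realise as a closed Jordan curve of the kind handled in the proof of Theorem \ref{acyclic-thm} --- which requires using the non-crossing property of each matching $M_c$ and $M_e$ separately, so that the arcs can be redrawn as nested Jordan arcs above and below the bisecting line without introducing spurious crossings.
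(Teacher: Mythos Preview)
Your proposal is correct and is essentially the argument the paper has in mind: the paper states this corollary without proof immediately after Theorem~\ref{acyclic-thm}, and your degree-$\leq 2$ multigraph built from the two non-crossing matchings is exactly the combinatorial skeleton underlying the Jordan-curve picture in that theorem's proof. The only point worth tightening is the one you already flag --- that an alternating cycle in $G$ genuinely realises a closed cycle in the sense of Theorem~\ref{acyclic-thm} --- but this is immediate once you note that the arcs of $M_c$ (resp.\ $M_e$) can be drawn as non-crossing arcs below (resp.\ above) the bisecting line, so the alternating cycle is precisely one of the nested Jordan curves of Figure~\ref{acyclicity}.
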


\begin{corollary}
	Let $Q$ be a (not necessarily free) pregroup. Then acyclicity fails precisely when $q^l=q^r$ for some $q\neq 1\in Q$.
\end{corollary}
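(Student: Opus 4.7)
The plan is to prove the biconditional by treating the two directions separately, noting that one of them is essentially already established in the proof of Theorem \ref{acyclic-thm}.

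For the forward direction, suppose acyclicity fails, so some non-empty word admits a pattern of expansions and contractions whose under- and over-scores close up into a Jordan curve. I would revisit the proof of Theorem \ref{acyclic-thm} and point out that the hypothesis of full properness is invoked only in its very last line, purely to convert the conclusion into a contradiction. Dropping that conversion, the remainder of the argument---drawing the Jordan curve, labelling arcs with weights $\pm 1$ as one traverses the loop, and observing that the total must be $\pm 2$---already shows directly that some non-unit symbol $q$ appearing on the curve satisfies $q^l = q^r$ (its ``double adjoint'' equals itself). So no new work is needed for this implication; one simply stops the earlier proof one step short of the contradiction.

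For the converse, assume $q\neq 1$ satisfies $q^l = q^r$ and exhibit an explicit cycle on a two-letter word. The natural choice is $w = q\,q^r$: the contraction $q q^r \leq 1$ gives an underscore joining the two symbols from below, while the expansion $1 \leq q q^l = q q^r$ gives an overscore joining the same two symbols from above. The two arcs glue into a single closed Jordan curve, so $1 \leq w \leq 1$ is realised by a genuinely cyclic pattern of expansions and contractions, witnessing the failure of acyclicity.

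The hard part, such as it is, is simply recognizing that Theorem \ref{acyclic-thm} has already done the substantive half of the work; the converse then reduces to the elementary two-symbol construction above, and the two directions together yield the claimed equivalence. No delicate estimates or auxiliary lemmas are required beyond what is already in Section \ref{underover-sect} and the proof of Theorem \ref{acyclic-thm}.
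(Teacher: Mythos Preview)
Your proposal is correct and is exactly the argument the paper has in mind: the corollary is stated without proof precisely because the forward direction is the content of the proof of Theorem~\ref{acyclic-thm} with the final contradiction step removed, and the converse is the obvious two-symbol construction you give. There is nothing to add.
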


\section{Grammatical interpretations}\label{grammar-sect}
Grammatical interpretations are traditionally carried out using free pregroups on sets or posets\footnote{There appears to be little discussion in the literature of the relative merits of using free vs. non-free pregroups within linguistics. We therefore observe that the constructions and results of this paper are equally applicable in the non-free case, but use free pregroups for our concrete examples.}. % By contrast, for concrete language-processing applications we often work with degenerate models such as vector space semantics where we have both symmetry and identification of left and right adjoints. 
For purely linguistic applications, grammatical types are modeled by words in a free pregroup, and the partial ordering is interpreted as an information ordering; $x\leq y$ expresses that {\em `$x$ is a special case of $y$'}. We are of course interested in pregroup words that are a special case of the `sentence' type, and seek to demonstrate this by a suitable pattern of reductions / underscores.  
%In the simple case where the noun-phrase type $n$ and sentence type $s$ are primitive,  an intransitive verb might be modeled by the composite $(n^rs)$, so the contraction $n( n^r s) \leq s$ expresses that a {\em noun phrase}, followed by an { \em  intransitive verb}, is a special case of a sentence.
This is best illustrated by a concrete example:

\subsection{A worked example}
As a concrete example, we demonstrate how pregroups may be used to describe grammatical structure in a language where the noun phrase type is not primitive, due to mutations of nouns following either possessives or prepositions. We give a pregroup analysis of the grammaticality of the modern Welsh sentence: {\bf ``Dyma fy nghath i''} (See Figure \ref{cymraig} for an explanation of the grammatical constituents).

\noindent
Our generating  poset is 
$\{ n,s,d_{pt},c_1, n_p : n_p \leq n , d_{pt}\leq s\}$, with the following interpretation:

{\begin{center}
		%generators of our free pregroup are the following:%
		\begin{tabular}{|c|c|}
			\hline
			$s$				& sentence	\\
			$d_{pt}$			& declarative present tense sentence \\
			$n$				& noun phrase \\ 
			$n_p$			& noun (1st person possessive form) \\
			$c_1$			& 1st person confirming pronoun	\\
			\hline
		\end{tabular}
	\end{center}
}
%The constituent parts, and their pregroup typing, are shown in Table \ref{cymraig}.
\begin{figure}[h]\caption{An example Welsh sentence}\label{cymraig}
	\begin{center}
		\begin{tabular}{|l|l|c|}
			\hline
			&       declarative p.t. sentence		&  $d_{pt}\leq s$			      \\ 
			&							&										\\
			{\bf Dyma} 	& 	{\bf ``here is''}				&	$d_{pt}n^l$				\\
			&							&										\\
			{\bf fy} 		& 	{\bf ``my''} 				     	&     $n c_1^l n_{p}^l$		   \\
			&							&										\\
			{\bf nghath}	&	{\bf ``cat''} [pos.]				& $n_p\leq n$								\\
			&							&										\\
			{\bf i}			&      [confirming pronoun]			& $c_1$								\\
			&      (first person)				& 									\\
			
			\hline
		\end{tabular}
	\end{center}
\end{figure}

The pattern of contractions 
\[ d_{pt}\underline{ n^l\ n} \underline{ c_1^l \underline{n_{p}^l \ n_p} \ c_1} \]
then demonstrates that $d_{pt}n^lnc_1^l n_{p}^l n_p c_1 \leq d_{pt}\leq s$, and so the given sentence is a special case of a declarative present tense sentence, which is itself a special case of a sentence.

\subsection{Formalizing grammaticality, and the `language bracketing'}\label{bracketing-sect}
The informal description of Section \ref{grammar-sect} may be recast in monoid theoretic and categorical terms.

\begin{definition}
 Assume some set $T$ of grammatical types that contains a distinguished {\em sentence type}, such as 
 \[ T=\{ SENTENCE, TRANSITIVE\_VERB, CONFIRMING\_PRONOUN , \ldots \} \] 
 A {\bf pregroup  model} is simply a function $\mu:T\rightarrow P$ that assigns elements of a pregroup $P$ to each grammatical type. %element of this set. 
When the pregroup is the free pregroup on some generating poset $G$ (as is standard for natural language models), the interpretation of the sentence type is presumed to be a generator, commonly denoted $s\in G$. (We are unaware of any linguistic argument for or against the assumption that the sentence type must be modeled by a generator).%mapping the sentence type to any composite --- neither are we aware of any justi
\end{definition}

We formalize this categorically, based on the free monoid functor $(\ )^*:{\bf Set}\rightarrow {\bf Mon}$ from the category of sets to the category of monoids, and the well-known monadicity of the free monoid / underlying set pairing.%to formalise a useful concept:% to define a useful concept.
\begin{definition} %For arbitrary $X\in Ob({\bf Set})$, we denote the induced inclusion by $\iota_X\in {\bf Mon}( X^*, (X^*)^*)$. 
	Given arbitrary $M\in Ob({\bf Mon})$, we denote the {\em flattening} associated with the above monad by $(\ )_M^\flat\in {\bf Mon}(M^*,M)$. 
	A pregroup model $\mu:T\rightarrow P$ extends to a monoid homomorphism by the free monoid functor, giving $\mu^*\in {\bf Mon}(T^*,P^*)$. We refer to this as the  {\bf language bracketing} for reasons we explain below. 
	
	It is then usual to work with the composite of the language bracketing and the flattening functor, giving the {\bf pregroup interpretation} $\mu^{\circ}: T^*\rightarrow P$, by the following diagram in $\bf Mon$
	\[ 
	\xymatrix{ T^* \ar[r]^{\mu^*} \ar [dr]|{\mu^{\circ}}				& P^* \ar[d]^{(\ )^\flat} \\
																		& P
	}
	\]  
	Remembering the pregroup structure on $P$, a word of grammatical types $w\in T^*$ is then a grammatically correct sentence iff $\mu^{\circ}(w)\leq \mu^{\circ}(SENTENCE)$.

	(We will, as is traditional, abuse terminology and talk about the pregroup interpretation of the sentence within a language, as well as the pregroup interpretation of a formal string of grammatical types).
	\end{definition}

At least in the free setting, if not generally, the language bracketing may be thought of as taking the pregroup interpretation of a sentence, and bracketing it (implicitly, giving a word in the free monoid over the relevant pregroup). For the interpretation of the sentence, ``Dyma fy ngath i", we have the following:
\[ ( \ d_{pt}n^l \ ) \ ( nc_1^l n_{p}^l ) \ (  \ n_p\ ) \ (  c_1 \ ) \]
This is an element of the free monoid $(P_G)^*$, rather than the free pregroup $P_G$. However, applying the flattening homomorphism will result in a word of $P_G$ that is beneath the sentence type, as required.

\section{Contractions, expansions, and the categorical hypothesis}\label{cat_hyp-sect}
	In the demonstration of Section \ref{cymraig}, only contractions are used to demonstrate that the given sentence is a special case of the sentence type. This is indeed a general principle \cite{JL99}; expansions currently play no role in grammatical applications of pregroups (as opposed to Natural Language Processing applications). From Lambek onwards \cite{JL99}, authors have sought to account for the fact that the formalism has both expansions and contractions.

	The justification given by J. Lambek is that they nevertheless determine the structure of pregroups, even though they play no r\^ole in grammatical applications. A related but stronger viewpoint is implicit or explicit in more recent categorically motivated approaches such as \cite{CCS,CGS,GSCCP}. This claim is that:
	\begin{quotation}
		{\bf \large Both underscores and overscores model the interaction, or flow of information, between components of a (grammatically correct) sentence}
	\end{quotation}
	We refer to this as the {\bf categorical hypothesis}. It is motivated by previous applications of compact closed categories such as {\em logical models} \cite{SA96,PHD}, {\em lambda calculus} \cite{AHS}, {\em Turing machines} \cite{PHD,PH08}, {\em quantum protocols} \cite{AC}, and {\em causal structures} \cite{KU}, where the interpretation of units and co-units as modeling information flow or causal connection is by now well-established. 
	
	Our claim is that this hypothesis is not simply a convenient viewpoint, but makes concrete predictions about pregroup models of natural languages (as opposed to the formal languages that may be expressed in pregroup terms), and thus about grammatical structures for natural language generally.  
	
	\subsection{The importance of information flow}\label{fubar-sect}
	Implicit in the categorical hypothesis is the claim that there is actual interaction, or information flow between all the individual words in a sentence: words are brought together to form a sentence because there is non-trivial interaction between them all.  Although this may seem a triviality, it is not enforced by the pregroup axioms, and we may build example grammars where this is not the case. 
	
	Consider a language with grammatical types $T=\{ SENTENCE,FOO, BAR, DOG, DUCK \}$, together with the free pregroup over the discretely ordered set $G=\{ s , a,b,c \}$, and the pregroup model
	\[ \mu(SENTENCE)=s \ , \ \mu(FOO)=sac^l \ , \ mu(BAR)=ca^r \ , \ \mu(DOG)=a^rb^l \ , \ \mu(DUCK)=  ba^{rr} \] 
	The pregroup interpretation of $FOO.BAR.DOG.DUCK$ is $sac^l ca^r a^rb^l ba^{rr}$, so this identified as a grammatically correct sentence by the following pattern of contractions:  
	\[ s\ \underline{a\ \underline{c^l\ c}\ a^r}\ \underline{a^r\ \underline{b^l\ b}\ a^{rr}} \]
	When considering the overscores, we observe that there are in fact no expansions at all! Thus $FOO.BAR$ and $DOG.DUCK$ are connected by neither underscores nor overscores.  Assuming the correctness of the categorical hypothesis, we conclude that there is no causal interaction / information flow between $FOO.BAR$ and $DOG.DUCK$ in any (grammatically correct) sentence with this typing. We therefore wish to rule it out as an appropriate typing for a meaningful sentence. 
	
	\begin{remark}
	The above objection to this typing as appropriate for a meaningful sentence in a natural language is of course reminiscent of the usual undergraduate objection to the `implication' of boolean logic --- that there is no causal connection between the antecedent and the consequent. Although the use of logical connectives in natural language is not always a good match for their formal logical interpretations, in Section \ref{simply-sect} we consider a restricted fragment that rules out such possibilities, and consider consequences of  
	the categorical hypothesis within this restricted setting. % only.  
	\end{remark}

	\section{Causal connections in sentences and grammars}
	We wished to rule out the $FOO.BAR.DOG.DUCK$ example of Section \ref{fubar-sect} due to a failure of `connectedness', which interprets under the categorical hypothesis as a lack of information flow between the constituents of the sentence. We formalize this as follows
	\begin{definition}
		We assume a set of grammatical types $T$, a pregroup $P$, and a grammatical model $\mu:T\rightarrow P_G$. 		
		Given a word $w\in T^*$, we define its {\bf causal graph} $\mathfrak C_w$ to be the following undirected graph:
		\begin{description}
			\item [Nodes] These are elements of the language bracketing $\mu^*(w)\in P_G^*$.
			\item [Edges] There is an edge between two nodes for each underscore / overscore connecting them, in the image of the language bracketing under the under the `flattening' homomorphism.% reduction of $\mu^{\circ} (w)$ to the sentence type.
		\end{description}
	If $\mathfrak C_w$ is connected, we say that $w$ is a {\bf causally connected word} of $T^*$. If this condition holds for all words of $T^*$ that reduce to a given grammatical type $S\in T$, we say that the grammatical model $\mu:T\rightarrow P_G$ is {\bf $S$-connected}. 
	\end{definition}
	
	\begin{examples}\label{cymraig_con-exam}
	By way of illustration, we contrast the causal graph for our modern Welsh sentence, ``Dyma fy ngath i'' with that of the formal example ``FOO.BAR.DOG.DUCK''. As we are working within free pregroups, we may simply superimpose the language  bracketing with the underscores and overscores. For the Welsh example, we derive:
	%	\[ ( \ d_{pt}n^l \ ) \ ( nc_1^l n_{p}^l ) \ (  \ n_p\ ) \ (  c_1 \ ) \]
		\[ {\bf (} d_{pt}\underline{ n^l{\bf )}\ {\bf (}n} \underline{ c_1^l \underline{n_{p}^l  {\bf )} \ {\bf (}n_p{\bf )}} \ {\bf (}c_1{\bf )}}  \]
	Doing the same for the formal example, we derive the following two causal graphs:
			\small 
			\[ 
			\xymatrix{
				&									 & &															& *+[o][F-]{c_1}	\\
				\mbox{\color{black}Dyma fy nghath i}		&		 *+[o][F-]{sn^l}\ar@{-} [rr] & & *+[o][F-]{nc_1^ln_p^l}	\ar @{-}[ur]	 \ar @{-}[dr]		&					\\
				&									 & &															& *+[o][F-]{n_p}
			}
			\]
			
			\[ 
			\xymatrix{
				&		*+[o][F-]{sac^l}\ar@{-}@/^12pt/ [dd]	& &												& &	 \\
				\mbox{\color{black}FOO BAR DOG DUCK}	&												& & *+[o][F-]{a^rb^l}\ar@{-}@/^12pt/ [rr] 		& &	*+[o][F-]{ba^{rr}}\ar@{-}@/^12pt/ [ll]	\\
				&		*+[o][F-]{ca^r}\ar@{-}@/^12pt/ [uu]		& &												& &
			}
			\]
	Although the pattern of underscores \& overscores is almost identical for these two examples, the language bracketing differs significantly. This leads to substantially different causal graphs; one of which we find reasonable for a natural language, and the other we do not. 
	\end{examples}
	
	\subsection{A hypothesis on natural vs formal Languages}
	
	\begin{hypothesis}[The Connectedness Hypothesis]\label{connected-hypo} \ \\
	{\bf \large 	We conjecture, based on the categorical hypothesis, that pregroup models of natural languages are $SENTENCE$-connected.} 
		\end{hypothesis}
		
		\begin{remark} The above hypothesis is not a conjecture in the mathematical / logical sense. It is not amenable to a formal proof but may perhaps be disproved by a convincing counterexample. It shares some common features with the notion of a scientific theory within K. Popper's `critical rationalism', where it is claimed that, {\em ``Every `good' scientific theory is a prohibition: it forbids certain things to happen. The 
			more a theory forbids, the better it is''}  \cite{KP}.  However, it is somewhat exceptional in that it does not seek to relate a man-made model (with predictive power) to natural phenomena; rather, it could be seen as a scientific theory about models. 
		
		This of course, raises the question of what a `convincing counterexample' would look like, and whether a pregroup model of natural language that was not $SENTENCE$-connected would instead be taken as evidence that the model, rather than the hypothesis, was incorrect.
			
		It is also worthwhile to note the underlying assumptions.  It is clearly derived from the categorical hypothesis, and so is underpinned by the assumption that there is indeed `information flow' between the words in a natural language sentence. This in turn assumes that the purpose of natural language grammar is to connect words in a meaningful way, and is not simply a game played with a possibly arbitrary set of rules. 
	\end{remark}
		
Should the above hypothesis be disproved by a convincing counterexample, we would, of course, simply move on to analysing differences between sentences which are,
		and are not, causally connected. However, it is more likely that degeneracies at the level of the models result in
		us seeing spurious causal connectedness – traces of information flow that does not actually take place.
		If we take the FOO-BAR DOG-DUCK example, and identify $a^l = a = a^r$, we see exactly this via the
		resulting pattern of overscores. The linguists convention of using free (and hence fully proper) pregroup models should go some
		way towards eliminating this, but we will observe similar phenomena in `degenerate' models,  where what should be distinct
		grammatical types are mapped to the same pregroup words.

		\section{A simplified class of sentences}\label{simply-sect}
		We now define a simplified fragment of the languages identified as grammatically correct by a pregroup model. The original motivation for this was applications to Compositional Distributional Semantics \cite{CCS}, which aims to combine both meaning and grammar into a single concrete setting. The intention was to rule out grammatical constructs that unavoidably lead to rather structurally complex models. %rule out structures whose models would require acertain degree of complexity.%  $P_{n,s}$ that reduce to the sentence type. 
		
		\begin{definition} Given a set $G=A\cup \{s\}$,
			we say that a word $w\leq s\in P_G$ {\bf simply reduces to $s$} iff it is of the form $usv$ where $u,v\leq 1\in P_A \leq P_G$. We call the set of all such words the {\bf simply $s$-reducing} words of $P_G$. Given a set $T$ of grammatical types, we say that a pregroup model $\mu:T\rightarrow P_G$ is {\bf simply reducing} iff all elements of $\mu^{\circ}(T^*)$ that reduce to $s$ also simply reduce to $s$.
		\end{definition}
		
		\begin{remark}[The original motivation for simply-reducing words]
			In \cite{PH13}, a strong case is made that `logical' connectives in natural language (such as `and', `or',  etc.) must be polymorphically typed (in a similar manner to \cite{JYG}), \& as a consequence of this any concrete models must be self-dual idempotents of a compact closed category. This would of course imply that they are reflexive \cite{PH99}, and there is a direct link from there to models of pure untyped lambda calculus \cite{AHS}. In terms of computational tractability, this is not where we wish to go! 
			
			Such structures are also not expressible within the vector space models commonly used in distributional semantics; it is only the subcategory of finite-dimensional vector spaces that is compact closed, and this of course contains no non-trivial idempotent objects.
			
			The intention of the above definition was to rule out such grammatical constructs that take as `input' an arbitrary sentence or family of sentences and then `output' a  larger grammatically correct sentence -- the motivation being that unrestricted use of such constructs leads in short order to irreducibly complex models\footnote{A somewhat facetious example being the `Encyclop\ae dia Britannica' problem, where we take any particularly large text and replace every full stop by conjunction. This leads to the question of exactly how large and complex our concrete `sentence type' should be when we allow for naive unrestricted conjunctions.}. 
			Thus, as well as ruling out logical connectives, this also eliminates the possibility of modeling {\em epistemic constructs} such as ``He knows that ...'', ``I believe ...'', and indeed, meta-statements ``It is not provable that ...''. 
		\end{remark}
		
		Logicians -- including the author -- may consider these restrictions to be taking all the fun out of models of meaning. However, they do lead to fragments of language that are amenable to vector space style models, and also provide a secure setting for formalizing some consequences of the categorical hypothesis. We hope to show that they also lead to some interesting non-trivial algebra.

%	\subsection{An algebraic characterisation}
 	
 	\section{$\pns$ --- a toy example}\label{pns-sect}
 	Although deciding whether an individual sentence is causally connected seems to be a simple task (Example \ref{cymraig_con-exam}), deciding whether an entire pregroup model is causally connected appears to be a harder task, even in the simply reducing case.
 	
 	We take some small steps in this direction, within a `toy example' based on the free pregroup over the set $\{ n , s\}$, equipped with the discrete partial order.  This is commonly used as an illustration of pregroup models, with the two basic types correspond to `noun' and `sentence' respectively. 
 	
 	Let us assume a set of grammatical types 
 	\[ T= \{ SENTENCE\ ,\  transVERB \ ,\  intVERB \ ,\ NOUN\ ,\ attADJ\ , \ NONCE  \} \tag{Types\_Example} \]
 	These all, excluding the type $NONCE$, have the obvious intended meaning, so we may go some way towards constructing a simply reducing pregroup model $\mu:T\rightarrow \pns$ by defining 
 	\[   \mu(transVERB)=n^rsn^l \ ,\  \mu(intVERB)=n^rs \ ,\ \mu(attADJ) = nn^r  \tag{Partial\_Model}\]
 	  
 	%Words in $\pns$ that simply reduce to the sentence type are then trivially of the form $vsu$ for some $u,v\leq 1$ in the free monogenic pregroup $P_{\{ n\} }$. Thus, characterizing such words reduces to the problem of characterizing the monoid of words below the identity.  
 	We now demonstrate how to give a pregroup typing to $NONCE\in T$ that leads to a simply reducing pregroup model $\mu:T\rightarrow \pns$ that is {\em not} $SENTENCE$-connected. We thus exhibit a class of pregroup models that, by Hypothesis \ref{connected-hypo}, we do not expect to model any real-world natural language.

 	\begin{definition}
		In a pregroup $P$, we define the {\bf down-closure of the identity} $\left[ 1_P\right]_\downarrow$ to be the submonoid of words that are below the identity. This contains the identity and closure under composition is ensured by compatibility of composition and partial order (categorically, the interchange law for a monoidal tensor).
	\end{definition}

	\begin{remark}
		From a linguistic point of view, we should be suspicious of any pregroup model where where the interpretation of some type or string of types falls within this monoid; this would correspond to a series of words that could be arbitrarily added to either side of any grammatically correct sentence to give another grammatically correct sentence. %, giving another valid sentence, with no guarantee of any causal connection between the components.
	\end{remark}
	
			\begin{theorem} Let $T$ and $\mu: T-\{ NONCE\} \rightarrow \pns$ be as defined in Types\_Example and Partial\_Model respectively. Let us also denote the sub-pregroup of $\pns$ generated by $\{ n \} \subseteq \{ n,s \}$ by $P_{ \{ n\} } \leq \pns$. Then for arbitrary $w\in \left[ P_{ \{ n\} } \right]_\downarrow$, completing $\mu$ to a globally defined function by taking $\mu(NONCE)=w$ will give a pregroup typing that is:
				\begin{enumerate}
					\item simply reducing, and 
					\item not $SENTENCE$-connected.
				\end{enumerate} 
				
			%	assume that the above pregroup model $\mu:T\rightarrow \pns$ is simply connected. Then $\mu^{\circ}(T^*)\cap [1_{P_{ \{ n\} } }]_\downarrow = \left\{ 1_{\pns} \right\}$.
			\end{theorem}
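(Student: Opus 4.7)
The plan is to exploit the rigid structure of $\mu$: every generator other than $s$ appearing in any $\mu(t)$ lies in $P_{\{n\}}$, and the generator $s$ itself appears only ``positively'' (never as $s^l$ or $s^r$). Taking the natural values $\mu(SENTENCE)=s$ and $\mu(NOUN)=n$, the image of every type on which $\mu$ is defined lies in $\{s,n,n^rsn^l,n^rs,nn^r,w\}$, and $w \in P_{\{n\}}$ contributes no $s$. Consequently, for every $v \in T^*$ the image $\mu^\circ(v)$ factors uniquely as $u_0\,s\,u_1\,s\,u_2 \cdots s\,u_k$, with each $u_i \in P_{\{n\}}$ and $k$ equal to the number of $s$-containing types appearing in $v$. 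This structural observation is what makes both claims go through.

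For (1), suppose $\mu^\circ(v) \leq s$. By the contraction-then-expansion normal form, the derivation may be rewritten as contractions down to some word $a_\ast$ followed by expansions up to $s$. Since expansions only add letters and $|s|=1$, one has $|a_\ast| \leq 1$; the only single-letter element beneath $s$ in a free pregroup over a discrete poset is $s$ itself, so $a_\ast = s$ and no expansions are required. Now each $s$ in $\mu^\circ(v)$ is inert under contraction, since $\mu^\circ(v)$ contains no adjoint of $s$ to pair it with, so reducing to a single $s$ forces $k=1$. Writing $\mu^\circ(v) = u\,s\,v'$ with $u,v' \in P_{\{n\}}$, the remaining contractions are local and cannot straddle the inert $s$, so they split into independent derivations of $u \leq 1$ and $v' \leq 1$. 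This is precisely the simply-reducing condition.

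For (2) I would exhibit the counterexample $v = NONCE \cdot SENTENCE$: then $\mu^\circ(v) = w\,s \leq s$ since $w \leq 1$. The causal graph $\mathfrak C_v$ has exactly two nodes, one per factor of the language bracketing, namely $w$ and $s$. Arguing as in (1), the minimal derivation of $ws \leq s$ consists entirely of contractions within $w$: contracting a letter of $w$ against $s$ would require an $s$-adjoint inside $w$, which is impossible for $w \in P_{\{n\}}$. Hence no underscore of $\mathfrak C_v$ joins the $w$-node to the $s$-node. Nor can any overscore, since an expansion inserts an adjacent pair $(x,x^l)$ or $(x^r,x)$ at a single site, whose two endpoints necessarily lie in the same block of the language bracketing. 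Thus $\mathfrak C_v$ is disconnected, and $v$ witnesses the failure of $SENTENCE$-connectedness.

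The main delicacy is ensuring (2) is robust against \emph{any} choice of derivation, not only the minimal one; an elaborate derivation could in principle insert expansions that are later undone by contractions, producing new arcs in the decorated diagram. The clean way to close this gap is to observe that every endpoint of an underscore or overscore pairs a generator with one of its adjoints, yet the generator appearing in the $s$-block and the generators appearing in the $w$-block are disjoint in the free pregroup $P_{\{n,s\}}$; no such pair can therefore straddle the two blocks. Everything else is routine normal-form bookkeeping.
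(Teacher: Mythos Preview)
Your argument is correct, and it is in fact considerably more careful than the paper's own proof. For part~(1) the paper simply declares the claim ``almost immediate'' on the grounds that $\mu(NONCE)$ contains no occurrence of $s$; you spell out the normal-form bookkeeping that actually underlies this, including the observation that every $s$ in $\mu^\circ(v)$ is inert and hence exactly one of them must survive. For part~(2) the paper exhibits the witness $NOUN\cdot intVERB\cdot NONCE$, whose bracketing $(n)(n^rs)(w)$ gives a three-node causal graph with the $(w)$-node isolated; you instead use the two-node witness $NONCE\cdot SENTENCE$, whose bracketing $(w)(s)$ is disconnected for the same generator-disjointness reason. Your example is a little cleaner and your closing remark about robustness against non-minimal derivations --- that no underscore or overscore can pair a letter of $P_{\{n\}}$ with one of $P_{\{s\}}$ in a free pregroup --- plugs a gap the paper leaves implicit. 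Either counterexample does the job; yours just has fewer moving parts.
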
  
			
			\begin{proof}
				It is almost immediate that $\mu$ is indeed simply reducing, as $\mu(NONCE)$ contains no occurrences of $s\in \pns$.
				
				By compatibility of partial order and composition,	
				$\mu^{\circ}(NOUN.intVERB.NONCE)\leq s.1= s$
				so any sentence of this type is also a grammatical sentence. 
				 However, $\mu^{\circ}(NOUN.intVERB.NONCE)$ is of the form $nn^rsw$, where $w\in \left[ P_{ \{ n\} } \right]_\downarrow$. By considering the word bracketing, there is no causal connection between $(n^rs)$ and $\mu^{\circ}(NONCE)$. Thus $\mu:T\rightarrow \pns$ is not causally connected.
				 \end{proof}

			\begin{remark}	
				Mathematically, the above toy example is based on elements within the down-closure of the identity in the free pregroup generated by a singleton. Such elements are amenable to a simple  graphical description, based on the proof of Theorem \ref{acyclic-thm} -- the more general case appears to be significantly harder.
			\end{remark}

	\section{Comments on the assumption of freeness} 
	Linguistically, it is standard to base grammatical models on free pregroups. Even in areas where non-proper pregroups are used (such as natural language processing and compositional distributional semantics), the grammatical typing is commonly thought of as a mapping into a free pregroup, followed by some appropriate quotient. 
	
	Due to this linguistic convention, much of this paper has used free pregroups for illustrative purposes; however, concepts such as {\em language bracketing}, {\em causal connectedness} and {\em the categorical} and {\em connectedness hypotheses} are defined generally, rather than in simply in the free pregroup setting (notably, as observed in Section \ref{bracketing-sect}, the language bracketing is significantly more subtle in the non-free case). 
	%Whether it is worthwhile considering the significantly harder computational questions associated with using general pregroups is again ultimately an empirical question.

 \section{Conclusions \& future directions}
 This paper is of course very preliminary work. Our overall thesis is that the categorical hypothesis of Compositional Distributional Semantics makes predictions about the forms of grammar we do and do not expect to find in natural languages. We have attempted to formalize this to the point where concrete predictions can be made and potentially tested. 
 
 We do still require a better algebraic and categorical understanding of the implications of these restrictions, including characterizing the grammars that do and do not satisfy these conditions.%formal languages that can and cannot be generated under these conditions is a pressing goal.  

 \subsection{Other forms of connectedness, and the definition of types} The motivation for the claim that natural languages should be sentence-connected is clear, and it is natural to wonder whether the same applies to other grammatical types. % beyond the distinguished $SENTENCE$ type. 
 Given a series of natural language words that come together to make up something of, say, the $NOUN\_PHRASE$ type, the same intuition would suggest that pregroup models of natural language should be $NOUN\_PHRASE$ connected. 
 
 Based on this intuition, it is then hard to conceive of a grammatical type $U$ where we would {\em not} expect $U$-connectedness of pregroup models (with the possible exception of the constructions outlined in Section \ref{simply-sect}). We could then consider a much stronger form of Hypothesis \ref{connected-hypo}, and speculate that pregroup models of natural language should be $T$-connected, for all grammatical types $T$. %This would, of course, be a much stronger claim, leading to significantly more complex algebraic considerations.

 This raises the question of what should be considered as a `grammatical type'? The discussion of Section \ref{bracketing-sect} presents them as an a priori given, but it is worthwhile to consider their origins. In a language with $VERB-SUBJECT-OBJECT$ ordering, we are happy to call $VERB$ or $VERB\_PHRASE$ a grammatical type, but do not refer to the $SUBJECT-OBJECT$ pair of noun-phrases as a single type. Of course, a pair of $NOUN\_PHRASE$ types will not in general be connected, but will rely on the verb phrase to establish connections between them in a complete sentence. We may conjecture that causal connection or information flow is what distinguishes types from more arbitrary collections of words.

\section*{Acknowledgements} 
I am grateful to the anonymous referees of CAPNS for perceptive comments highlighting where clarifications or revisions were needed. I am also grateful for many discussions on linguistics, NLP, algebra, and category theory with the usual suspects, including Steve Clarke, Bob Coecke, Chris Heunen, Suresh Manandhar, Mehrnoosh Sadrzadeh, and Phil Scott.% Thanks also to Marcia Scott for discussions on, but not in, the Welsh language.

%\section{Bibliography}

%\nocite{*}
\bibliographystyle{eptcs.bst}
\bibliography{pg_bib}

\end{document}